\documentclass[11pt]{article}
\usepackage[margin=1.1in]{geometry}
\usepackage{amsmath,amssymb,amsthm,mathtools}
\usepackage{booktabs}
\usepackage{natbib}
\usepackage[colorlinks=true,linkcolor=blue,citecolor=blue,urlcolor=blue]{hyperref}

\newtheorem{theorem}{Theorem}
\newtheorem{lemma}{Lemma}

\newtheorem{definition}{Definition}
\theoremstyle{remark}
\newtheorem{remark}{Remark}

\newcommand{\F}{\mathcal{F}}
\newcommand{\X}{\mathcal{X}}
\newcommand{\Y}{\mathcal{Y}}
\newcommand{\GS}{\mathcal{G}_S}
\newcommand{\fat}{\operatorname{fat}}
\newcommand{\fatd}{\operatorname{fat}^{*}}
\newcommand{\opt}{\operatorname{opt}}
\newcommand{\Sel}{\operatorname{Select}}
\newcommand{\Med}{\operatorname{Median}}
\newcommand{\TV}{\operatorname{TV}}
\newcommand{\ind}[1]{\mathbf{1}\!\left[#1\right]}
\newcommand{\eps}{\varepsilon}

\title{An Agnostic Sample Compression Scheme for Squared Loss\\
of Near-Linear Size in the Fat-Shattering Dimension}
\author{Guangjian Zhang\\ \texttt{zgj1226029469@outlook.com}}
\date{August 31, 2026}

\begin{document}
\maketitle

\begin{abstract}
We construct, for every function class $\F\subseteq[0,1]^{\X}$ and every accuracy
parameter $0<\alpha\le 1$, an agnostic sample compression scheme for the empirical
squared loss: for \emph{every} finite sample $S\in(\X\times[0,1])^m$ with arbitrary
(noisy) labels, the scheme stores at most
$O\!\bigl(\fat(\F,c'\alpha)\cdot\log^{3}(2/\alpha)\bigr)$
original labeled examples and auxiliary bits---independent of the sample size
$m$---and reconstructs a function $\hat f$ with
$L_2(\hat f,S)\le\inf_{f\in\F}L_2(f,S)+\alpha$.
This resolves, in the positive, the open problem of Attias, Hanneke, Kontorovich,
and Sadigurschi (ICML 2024, Section 5), which asks for an $\alpha$-approximate
agnostic $\ell_2$ compression scheme of size
$\fat(\F,c\alpha)\cdot\mathrm{Polylog}(c/\alpha)$.
All previously known bounded-size constructions---agnostic
(Attias--Hanneke, ICML 2023; Attias et al., ICML 2024) and even realizable
(Hanneke--Kontorovich--Sadigurschi, ALT 2019)---incur a multiplicative
\emph{dual} fat-shattering factor $\fatd$, which can be exponentially larger than
the primal dimension. Our scheme removes the dual factor entirely, including in
the realizable case. The mechanism is simple to state: the dual factor in prior
work enters solely through a sparsification step used to force \emph{uniform}
approximation on the sample; by targeting only a $(1-\eps)$-fraction of sample
points---which suffices for an average-loss guarantee over a bounded range---the
boosting margin bound of K\'egl yields a number of rounds $O(\log(1/\eps))$
independent of $m$, and sparsification is never needed. The remaining obstacles
are protocol-level: the booster's target labels are synthetic values of a
near-optimal $f^{*}\in\F$, which we transmit through quantized side-information
bits attached to stored \emph{original} examples, and the cross term of the
squared loss forces the weak-learning scale $\eta=\Theta(\alpha)$---exactly
matching the same-scale $\fat(\F,c\alpha)$ form of the open problem.
\end{abstract}

\section{Introduction}

Sample compression \citep{littlestone1986relating} is a learning paradigm in
which the learner retains a small subsample (plus a few bits of side
information) from which its output hypothesis can be reconstructed. For binary
classification, the existence of bounded-size compression for every learnable
class was established by \citet{moran2016sample}, and compression has since
become a standard lens on learnability across settings.

For \emph{regression}, the situation has remained stubbornly asymmetric.
\citet{attias2024agnostic} (henceforth AHKS) formalized \emph{agnostic}
approximate sample compression for the $\ell_p$ losses: the compression must,
for every finite sample $S$ with arbitrary labels, reconstruct $\hat f$ with
empirical loss within $\alpha$ of the best in class,
\[
L_p(\hat f,S)\;\le\;\inf_{f\in\F}L_p(f,S)+\alpha,
\]
while storing a number of original labeled points and auxiliary bits independent
of the sample size. They constructed schemes of size
$\tilde O\bigl(\fat(\F,c\alpha/p)\cdot\fatd(\F,c\alpha/p)\bigr)$, where $\fatd$
is the \emph{dual} fat-shattering dimension---in the worst case exponential in
the primal dimension ($\fatd(t)\le 2^{\fat(t)+1}$, essentially attained). The
same primal--dual product form appears in the agnostic scheme of
\citet{attias2023adversarially} (whose agnostic $\alpha$-approximate compression
definition, their eq.~(8), is identical to the one used here), and even in the
\emph{realizable} uniform-approximation scheme of \citet{hanneke2019sample}
(henceforth HKS). AHKS accordingly posed as their central open problem
(Section~5 of \citealp{attias2024agnostic}):

\begin{quote}
\emph{Under the $\ell_2$ loss, does every class $\F$ of real-valued functions
admit an $\alpha$-approximate agnostic compression scheme of size
$\fat(\F,c\alpha)\cdot\mathrm{Polylog}(c/\alpha)$?}
\end{quote}

\subsection{Main result}

We answer this question affirmatively.

\begin{theorem}[Main theorem; formal version in Theorem~\ref{thm:main}]
There are universal constants $c',C>0$ such that for every nonempty
$\F\subseteq[0,1]^{\X}$ and every $0<\alpha\le1$ with
$d:=\fat(\F,c'\alpha)<\infty$, there is a deterministic agnostic sample
compression scheme for the empirical squared loss with additive error $\alpha$
whose total size (stored original labeled examples plus side-information bits)
is at most $C\,d\,\log^{3}(2/\alpha)$, for every sample length $m$.
\end{theorem}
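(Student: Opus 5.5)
The plan follows the mechanism described in the abstract.

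\textbf{Step 1: reduce to a near-optimal target.} Fix $S=((x_i,y_i))_{i\le m}$ and choose $f^*\in\F$ with $L_2(f^*,S)\le\inf_{f\in\F}L_2(f,S)+\alpha/4$. For any $\hat f$ with values in $[0,1]$,
\[
L_2(\hat f,S)-L_2(f^*,S)=\tfrac1m\sum_i (\hat f-f^*)(x_i)\bigl(\hat f+f^*-2y_i\bigr)(x_i)\le \tfrac{2}{m}\sum_i|\hat f(x_i)-f^*(x_i)|,
\]
using the decomposition $(a-y)^2-(b-y)^2=(a-b)(a+b-2y)$. It therefore suffices to reconstruct $\hat f$ that is within $\eta=c\alpha$ of $f^*$ on all but an $\eps=c\alpha$ fraction of the points. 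Those points contribute at most $2\eps$. This cross-term estimate is what forces the scale $\eta=\Theta(\alpha)$.

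\textbf{Step 2: a weak learner from primal fat-shattering.} Quantize $f^*$ to a grid of mesh $\eta$, giving synthetic labels $z_i=\lfloor f^*(x_i)/\eta\rfloor\eta$. For any distribution $P$ on $\{x_i\}$, draw $n=O(d\log^2(1/\alpha))$ points. The realizable uniform-convergence/compression result for fat-shattering classes (HKS-style, with $d=\fat(\F,c'\alpha)$) then yields a function, reconstructible from $O(d\,\mathrm{polylog})$ stored points together with their quantized $z$-values, that is $\eta$-close to $f^*$ with $P$-probability at least $1/2+\gamma$ for a constant $\gamma$. I will take this weak learner as available and state it as a lemma proved before the main theorem.

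\textbf{Step 3: boost with Kégl's margin bound and take the median.} Run the real-valued median boosting (Kégl's MedBoost) for $T$ rounds. Kégl's margin bound gives that the weighted median $\hat f=\Med(h_1,\dots,h_T)$ is $\eta$-close to the targets on all but a fraction $e^{-\Omega(\gamma^2 T)}$ of the sample. Choosing $T=O(\log(1/\eps))=O(\log(2/\alpha))$ makes this at most $\eps$, with no dependence on $m$. Since only a $(1-\eps)$-fraction is targeted, no sparsification or dual-fat step is needed.

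\textbf{Step 4: encode the compression.} Store the $T$ weak-learner subsamples. Each consists of original labeled examples $(x_i,y_i)$ plus $O(\log(1/\alpha))$ bits encoding $z_i$. The reconstruction function recovers the $z_i$, reruns each weak learner's reconstruction, and applies the median with the boosting weights quantized to $O(\log(1/\alpha))$ bits each. The total size is
\[
T\cdot O\bigl(d\log^2(1/\alpha)\bigr)=O\bigl(d\log^3(2/\alpha)\bigr).
\]

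\textbf{Main obstacle.} The hardest step is Step 2: obtaining a weak learner with a constant edge at scale $\eta$ using only $\fat(\F,c'\alpha)$ samples with a $\log^2$ overhead, in a way that can be reconstructed from stored examples plus quantized labels. A second concern is ensuring that quantizing both the labels and the boosting weights does not break Kégl's bound. I would handle this by running the booster directly on the quantized labels and absorbing the $\eta$ quantization error into the cross-term estimate of Step 1.
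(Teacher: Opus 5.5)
Your route is the paper's: a near-minimizer $f^*$, quantized values of $f^*$ carried as side bits on stored original examples, the tolerant weak learner at scale $\Theta(\alpha)$, and MedBoost run for $O(\log(1/\eps))$ rounds with no sparsification. Your closure step is a valid and slightly simpler variant. The difference-of-squares bound $L_2(\hat f,S)-L_2(f^*,S)\le\frac2m\sum_i|\hat f(x_i)-f^*(x_i)|$ needs only an empirical $L_1$ estimate, so a bad fraction $\eps=\Theta(\alpha)$ suffices. The paper's $L_2$ triangle inequality instead needs $\|\hat f-f^*\|_S=O(\alpha)$, hence $\eps=\alpha^2/64$. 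Both choices give $T=O(\log(2/\alpha))$.

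The gap is the size count, which as written does not give $\log^3$. You draw $n=O(d\log^2(1/\alpha))$ points per round and attach $O(\log(1/\alpha))$ bits to each, over $T=O(\log(1/\alpha))$ rounds. That totals $O(d\log^4(2/\alpha))$, not $T\cdot O(d\log^2(1/\alpha))$. You also have not paid for telling the reconstructor which stored points belong to which round. $S'$ is an unordered sub-multiset of $S$, and the same index can be drawn repeatedly and in several rounds, so you cannot simply store ``the $T$ subsamples.'' The paper stores each distinct point once with a $T$-bit incidence vector, costing up to $nT^2$ bits. Both this term and the label bits fit in $O(d\log^3(2/\alpha))$ only once $n=O(d\log(2/\alpha))$: linear in $d$, with a single logarithm. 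That is exactly what the entropy-based Lemma~\ref{lem:k6} gives at constant $\beta,a,\delta$; generic covering-number bounds for fat-shattering classes typically lose an extra logarithm and a $\log d$. So the weak learner you ``take as available'' has to be this specific one.

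Your ``second concern'' is also not resolved by your proposed fix. Running the booster on quantized labels handles label quantization, but weight quantization is a separate issue. The raw coefficients $\alpha_t$ are unbounded: they blow up as the weighted error tends to $0$, and are infinite when it equals $0$, which needs its own branch. So they cannot be quantized directly to a bounded number of bits. You must normalize to $w_t=\alpha_t/\sum_s\alpha_s$ and round to multiples of $1/N$ with $N=\Theta(T)$, so that $\TV(w,\tilde w)<\gamma/2$. You then need to show that every $\tilde w$-median lies between the exact-weight $\gamma/2$-quantiles that K\'egl's bound controls (Lemmas~\ref{lem:quantization} and~\ref{lem:quantile}). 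Only then does the $e^{-\Omega(\gamma^2T)}$ bad-fraction bound transfer from the exact-weight ensemble to the reconstructed $\hat f$.
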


Since $\log^3(2/\alpha)\le\mathrm{Polylog}(c/\alpha)$, this is exactly the form
requested by the open problem, at the same scale $c'\alpha$ in the
fat-shattering dimension, with no dependence on $m$, no pseudo-dimension, and no
dual dimension. The increment over the state of the art is precisely the removal
of the $\fatd$ factor---a worst-case exponential improvement---relative to
\citet{attias2023adversarially} and \citet{attias2024agnostic} in the agnostic
case, and relative to \citet{hanneke2019sample} even in the realizable case (see
Remark~\ref{rem:realizable}).

\subsection{Technique overview}
\label{sec:overview}

Our starting point is a diagnosis of where the dual dimension enters all prior
constructions: it is the \emph{sparsification} step. HKS run K\'egl's median
boosting algorithm \citep{kegl2003robust} for $T=\Theta(\log m)$ rounds to force
the weighted-median aggregate to be $\eta$-accurate at \emph{every} sample
point, and then subsample the ensemble down to a size independent of $m$; the
uniform convergence needed to argue that the subsampled median tracks the full
median on all points is uniform convergence over the \emph{dual} class, and
costs $\fatd$. The same step, in an ERM-averaging variant, drives the recent
realizable scheme of \citet{aiyer2026scale}.

Our scheme never sparsifies, because it never needs uniform accuracy. For an
average-loss guarantee over the bounded range $[0,1]$, it suffices that the
aggregate be $\eta/2$-accurate on a $(1-\eps)$-fraction of the sample: bad
points contribute at most $\eps$ to the mean-squared distance. K\'egl's margin
bound (Lemma~\ref{lem:kegl}) exhibits a per-round contraction factor bounded by
a universal constant $\lambda<1$ (Lemma~\ref{lem:contraction}), so a fraction
$\eps$ of bad points is reached after $T=O(\log(1/\eps))$ rounds---independent
of $m$. The ensemble itself is then already small: each weak hypothesis is
represented by $O(\fat(\F,c'\alpha)\log(1/\alpha))$ stored points, via the
generic weak learner of HKS (Lemma~\ref{lem:k6} below), and there are only
$T=O(\log(1/\alpha))$ of them.

Two protocol-level obstacles remain, and their resolutions dictate the shape of
the scheme.

\paragraph{Synthetic labels ride in side information.}
In the agnostic setting the sample labels are arbitrary, so boosting toward them
is meaningless. Instead the compressor fixes a near-minimizer $f^{*}\in\F$ of
the empirical loss and boosts toward the \emph{synthetic} realizable labels
$f^{*}(x_i)$. The reconstructor cannot know $f^{*}$; but a compression scheme
may only store \emph{original} labeled examples. We therefore store original
pairs $(x,y)$ and attach, in side information, the value $f^{*}(x)$ quantized to
$O(\log(1/\alpha))$ bits per stored point. Reconstruction selects \emph{any}
class member consistent with the quantized values on the stored points; the
weak-learning guarantee we import (Lemma~\ref{lem:k6}) has exactly the
label-tolerance needed to absorb the quantization error.

\paragraph{The cross term forces the same scale $\eta=\Theta(\alpha)$.}
Passing from closeness to $f^{*}$ back to loss against the true labels costs, by
the triangle inequality in the empirical $L_2$ seminorm, a cross term
$2\|\hat f-f^{*}\|_S\sqrt{L_2(f^{*},S)}$. Since $L_2(f^{*},S)$ can be a
constant, we need $\|\hat f-f^{*}\|_S=O(\alpha)$---not merely
$O(\sqrt{\alpha})$---which forces the weak-learning scale $\eta=\Theta(\alpha)$
and hence the dimension $\fat(\F,\Theta(\alpha))$. This is precisely the
same-scale form of the open problem; in this sense the problem's stated target
anticipates the triangle-inequality route.

Beyond these, the analysis requires care at three technical points, each of
which we treat exactly: normalizing the boosting weights before quantization
(the raw weights are unbounded; Lemma~\ref{lem:quantization}), a total-variation
stability lemma placing the quantized-weight median inside the exact-weight
quantile interval of K\'egl's guarantee, including the strict-vs-nonstrict
quantile boundary (Lemma~\ref{lem:quantile}), and a prefix argument aligning our
per-round sample count with the imported weak-learner statement
(Section~\ref{sec:weak}).

\subsection{Related work}
\label{sec:related}

\citet{attias2023adversarially} introduced the agnostic approximate compression
notion used here (their eq.~(8)) and gave the first bounded-size agnostic
schemes for real-valued classes, of size $\tilde O(\fat\cdot\fatd)$;
\citet{attias2024agnostic} systematized the $\ell_p$ theory, proved the
linear-regression case $O(d\log(p/\alpha))$ for $p\in(1,\infty)$, showed that
\emph{exact} ($\alpha=0$) agnostic compression of bounded size is impossible for
$p\in(1,\infty)$---refining the $\ell_2$ impossibility of
\citet{david2016supervised}, with which our approximate scheme is fully
compatible---and posed the open problem quoted above.
\citet{hanneke2019sample} gave the general realizable scheme with uniform
approximation, of size $\tilde O(\fat\cdot\fatd)$; our removal of the dual
factor applies to this setting as well (Remark~\ref{rem:realizable}).
\citet{attias2025reductions} reduce regression compression to \emph{binary}
compression through the pseudo-dimension; the route is conditional on the
long-standing binary sample compression conjecture, is not at the fat-shattering
scale, and their Open Problem~4.6 remains open---our construction does not
resolve it, but bypasses it: we obtain the unconditional near-linear bound
directly, without passing through binary compression.
\citet{aiyer2026scale} study scale-sensitive shattering for realizable,
synthetic-label, pointwise-approximation objectives; their results do not
address agnostic loss-based compression, and their compression lemma again
carries a primal--dual product. Quantitatively, our bound strictly improves
the $\fat\cdot\fatd$ form whenever $\fatd\gg\mathrm{polylog}(1/\alpha)$; since
$\fatd(t)\le2^{\fat(t)+1}$ is essentially attained
\citep{hanneke2019sample}, the improvement is exponential in the worst case.

\section{Preliminaries}
\label{sec:prelim}

\paragraph{Setting.}
$\F\subseteq[0,1]^{\X}$ is nonempty. A sample is a finite sequence
$S=((x_1,y_1),\dots,(x_m,y_m))\in(\X\times[0,1])^m$ with \emph{arbitrary}
labels (agnostic: no relation between the $y_i$ and $\F$ is assumed). Write
\[
L_2(f,S)=\frac1m\sum_{i=1}^m\bigl(f(x_i)-y_i\bigr)^2,
\qquad
\opt(S)=\inf_{f\in\F}L_2(f,S),
\qquad
\|g\|_S=\Bigl(\frac1m\sum_{i=1}^m g(x_i)^2\Bigr)^{1/2}.
\]

\begin{definition}[Agnostic approximate compression scheme
\citep{attias2023adversarially,attias2024agnostic}]
\label{def:compression}
A compression scheme is a pair $(\kappa,\rho)$ where
$\kappa(S)=(S',B)$ selects a sub-multiset $S'$ of the \emph{original labeled
examples} of $S$ and a bit string $B\in\{0,1\}^{*}$, and the reconstruction
$\rho$ maps $(S',B)$ to an arbitrary (not necessarily computable, not
necessarily in $\F$) function $\X\to[0,1]$. The scheme is
$\alpha$-approximate agnostic for $\F$ if for every $m$ and every sample $S$,
$L_2\bigl(\rho(\kappa(S)),S\bigr)\le\opt(S)+\alpha$. Its \emph{size} is
$\max_S\bigl(|S'|+|B|\bigr)$, required to be finite independently of $m$.
\end{definition}

\begin{definition}[Fat-shattering dimension \citep{alon1997scale}]
\label{def:fat}
A set $\{x_1,\dots,x_k\}\subseteq\X$ is $t$-shattered by $\F$ ($t>0$) if there
is a witness $r\in\mathbb{R}^k$ such that for every sign pattern
$\sigma\in\{-1,+1\}^k$ there exists $f\in\F$ with
\[
f(x_i)\ge r_i+t \text{ whenever } \sigma_i=+1,
\qquad
f(x_i)\le r_i-t \text{ whenever } \sigma_i=-1 .
\]
$\fat(\F,t)$ is the largest cardinality of a $t$-shattered set. (For
$[0,1]$-valued classes one may equivalently take $r\in[0,1]^k$.) The dual
dimension $\fatd$ is the fat-shattering dimension of the dual class
$\{x\mapsto f(x):f\in\F\}$ with the roles of points and functions exchanged.
\end{definition}

\paragraph{Imported results.}
We import two results of \citet{hanneke2019sample}, used \emph{only through the
statements below}; every adaptation is carried out in full in
Section~\ref{sec:analysis}. (Result numbering for \citealp{hanneke2019sample}
follows the arXiv version, arXiv:1805.08254; the conference version may number
these results differently.)

\begin{lemma}[MedBoost and K\'egl's bound; \citealp{kegl2003robust}, as
transcribed in Algorithm~1 and Lemma~5 of \citealp{hanneke2019sample}]
\label{lem:kegl}
MedBoost receives $((x_i,y_i))_{i\in[m]}$, round budget $T$, edge parameter
$\gamma$, and scale $\eta$; it maintains distributions $P_t$ on $[m]$ starting
from uniform $P_1$. In round $t$ a weak learner supplies $h_t$ that is
$(\eta/2,\gamma)$-weak w.r.t.\ $P_t$, i.e.
$P_t\{i:|h_t(x_i)-y_i|>\eta/2\}\le\frac12-\gamma$; setting
$\theta_i^{(t)}=1-2\ind{|h_t(x_i)-y_i|>\eta/2}$ and
\[
\alpha_t=\tfrac12\ln\frac{(1-\gamma)\sum_iP_t(i)\ind{\theta_i^{(t)}=1}}
{(1+\gamma)\sum_iP_t(i)\ind{\theta_i^{(t)}=-1}},
\]
it returns $T$ copies of $h_t$ with unit weights if $\alpha_t=\infty$, and
otherwise updates $P_{t+1}(i)\propto P_t(i)e^{-\alpha_t\theta_i^{(t)}}$. With
weighted quantiles $Q^{\pm}_s$ defined on the returned values (strict-inequality
form; see Section~\ref{sec:quantile}) and the weighted median lying between
$Q^-_s$ and $Q^+_s$ for every $s$, the returned ensemble satisfies
\[
\frac1m\sum_{i=1}^m
\ind{\max\bigl\{\,|Q^{+}_{\gamma/2}(x_i)-y_i|,\;|Q^{-}_{\gamma/2}(x_i)-y_i|\,\bigr\}>\eta/2}
\;\le\;
\prod_{t=1}^{T}\,e^{\gamma\alpha_t}\sum_{i=1}^{m}P_t(i)e^{-\alpha_t\theta_i^{(t)}} .
\]
\end{lemma}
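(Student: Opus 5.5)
This is the classical AdaBoost-style margin argument, transcribed to MedBoost. I would prove it in two steps. First, reduce the quantile event at each point to a statement about weighted votes. Second, bound the number of such points by the normalized exponential potential.

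\textbf{Step 1: from quantiles to votes.} Write $\theta_i^{(t)}$ for the per-round correctness indicators. Normalize the returned weights as $\bar\alpha_t=\alpha_t/\sum_s\alpha_s$. Suppose $|Q^{+}_{\gamma/2}(x_i)-y_i|>\eta/2$. By the definition of the strict-inequality weighted quantile (Section~\ref{sec:quantile}), the hypotheses $h_t$ with $|h_t(x_i)-y_i|\le\eta/2$ have total normalized weight at most $\tfrac12+\gamma/2$. The argument: all values in the window $[y_i-\eta/2,y_i+\eta/2]$ lie on one side of $Q^{+}_{\gamma/2}$, and the quantile definition caps the mass on that side. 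The same holds for $Q^{-}_{\gamma/2}$. In both cases,
\[
\sum_t\bar\alpha_t\theta_i^{(t)}\le \Bigl(\tfrac12+\tfrac\gamma2\Bigr)-\Bigl(\tfrac12-\tfrac\gamma2\Bigr)=\gamma ,
\]
that is, $\sum_t\alpha_t(\theta_i^{(t)}-\gamma)\le0$. The degenerate case $\alpha_t=\infty$ is handled separately. There the returned ensemble is $T$ copies of $h_t$, and $h_t$ has no error under $P_t$. Since $P_t$ has full support (its entries are products of positive exponentials), $h_t$ is correct everywhere, the left side is $0$, and the bound is trivial.

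\textbf{Step 2: exponential potential.} Step 1 gives $\ind{\text{bad at }i}\le\exp\bigl(\gamma\sum_t\alpha_t-\sum_t\alpha_t\theta_i^{(t)}\bigr)$. Now unroll the update. Let $Z_t=\sum_iP_t(i)e^{-\alpha_t\theta_i^{(t)}}$, so $P_{t+1}(i)=P_t(i)e^{-\alpha_t\theta_i^{(t)}}/Z_t$. Induction gives
\[
P_{T+1}(i)=\frac1m\exp\Bigl(-\sum_t\alpha_t\theta_i^{(t)}\Bigr)\Big/\prod_tZ_t .
\]
Average the indicator bound over $i$ and use $\sum_iP_{T+1}(i)=1$:
\[
\frac1m\sum_i\ind{\cdot}\le e^{\gamma\sum_t\alpha_t}\prod_tZ_t=\prod_t e^{\gamma\alpha_t}Z_t .
\]
This is exactly the stated bound. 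One caveat: $\alpha_t\ge0$ is needed so that the normalization in Step 1 is legitimate. This follows from the weak-learning condition, since error at most $\tfrac12-\gamma$ makes the ratio in the logarithm at least $1$.

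\textbf{Main obstacle.} The delicate point is Step 1: matching the strict-inequality quantile convention to the vote-margin inequality, including ties at the boundary. I would handle this by writing $Q^{\pm}_s$ explicitly as the sup/inf of values $z$ for which the weight of $\{h_t<z\}$ (respectively $\{h_t>z\}$) is below $\tfrac12-s$. Then I would check, case by case according to whether $Q^{+}$ lies above or below $y_i$, which mass bound is inherited. The weighted-median-between-quantiles clause is not needed for this inequality itself. Everything else is the standard AdaBoost telescoping.
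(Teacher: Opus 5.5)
Your proof is correct, and it follows the standard route. The paper gives no proof of this lemma; it imports the statement from Lemma~5 of \citet{hanneke2019sample}, after \citet{kegl2003robust}. Your argument is the standard proof from those sources: the strict-quantile failure event forces the normalized vote margin $\sum_t\bar\alpha_t\theta_i^{(t)}\le\gamma$, and the AdaBoost telescoping gives $\frac1m\sum_i e^{-\sum_t\alpha_t\theta_i^{(t)}}=\prod_t Z_t$. One point to state precisely: weak learning gives $\alpha_t>0$ strictly, not just $\alpha_t\ge0$, since $\frac{(1-\gamma)(1/2+\gamma)}{(1+\gamma)(1/2-\gamma)}>1$; this is what makes your normalization $\bar\alpha_t$ well defined.
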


\begin{lemma}[Generic tolerant weak learner; Theorem~9 of
\citealp{hanneke2019sample}, via \citealp{mendelson2003entropy}]
\label{lem:k6}
There are universal constants $c_1,c_2,c_3>0$ such that for every
$\eta_6,\delta,\beta\in(0,1)$, $a\in[0,1)$, every distribution $P$, and every
target $f^{*}\in\F$: with
\[
n_6=\Bigl\lceil
\frac{c_1}{\beta}\Bigl(\fat\bigl(\F,c_2\eta_6\beta(1-a)\bigr)\,
\ln\frac{c_3}{\eta_6\beta(1-a)}+\ln\frac1\delta\Bigr)\Bigr\rceil
\]
i.i.d.\ points $X_1,\dots,X_{n_6}\sim P$, with probability at least $1-\delta$:
\emph{every} $f\in\F$ with $\max_{i\le n_6}|f(X_i)-f^{*}(X_i)|\le a\eta_6$
satisfies $P\bigl(|f-f^{*}|>\eta_6\bigr)\le\beta$.
\end{lemma}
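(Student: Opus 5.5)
The plan is the classical double-sample (symmetrization) argument. The one twist is that closeness to $f^{*}$ is measured through a clipped, Lipschitz surrogate. This lets us cover in \emph{empirical $L_1$} rather than $L_\infty$, and so use Mendelson's entropy bound, which is free of $n_6$. Write $n=n_6$ and set
\[
\psi_f(x)=\min\Bigl\{1,\max\Bigl\{0,\tfrac{|f(x)-f^{*}(x)|-a\eta_6}{(1-a)\eta_6}\Bigr\}\Bigr\}.
\]
Then $\psi_f$ vanishes wherever $|f-f^{*}|\le a\eta_6$ and equals $1$ wherever $|f-f^{*}|>\eta_6$. Moreover $f\mapsto\psi_f$ is $\frac{1}{(1-a)\eta_6}$-Lipschitz pointwise. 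The bad event is that some $f\in\F$ has $\sum_{i\le n}\psi_f(X_i)=0$ but $P(|f-f^{*}|>\eta_6)>\beta$. It therefore suffices to show that, with probability at least $1-\delta$, no $f$ has $\widehat{\mathbb E}_n\psi_f=0$ together with $\mathbb E_P\psi_f\ge P(|f-f^{*}|>\eta_6)>\beta$.

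\emph{Step 1 (ghost sample).} Draw an independent ghost sample $X'_1,\dots,X'_n\sim P$. Fix $f$ with $P(|f-f^{*}|>\eta_6)>\beta$. By a Chernoff bound, for $n\ge c/\beta$ the ghost sample contains at least $\beta n/2$ points with $|f-f^{*}|>\eta_6$, with probability at least $1/2$. The standard conditioning argument then bounds the bad event's probability by $2\Pr[\exists f:\sum_i\psi_f(X_i)=0,\ \sum_i\psi_f(X'_i)\ge\beta n/2]$.

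\emph{Step 2 (covering).} Condition on the $2n$ points $Z=(X,X')$. Take an empirical $L_1(Z)$-cover $\mathcal N$ of $\F$ at scale $(1-a)\eta_6\beta/16$. By the Lipschitz property, $\{\psi_g:g\in\mathcal N\}$ is an $L_1(Z)$-cover of $\{\psi_f\}$ at scale $\beta/16$. Every bad $f$ therefore has a $g\in\mathcal N$ with $\sum_i\psi_g(X_i)\le\beta n/8$ and $\sum_i\psi_g(X'_i)\ge 3\beta n/8$, after absorbing the $2n\cdot\beta/16$ total $L_1$ slack. Mendelson's entropy bound (\citealp{mendelson2003entropy}, as used by HKS) gives
\[
\ln|\mathcal N|\le C\,\fat\bigl(\F,c\,\eta_6\beta(1-a)\bigr)\ln\tfrac{c'}{\eta_6\beta(1-a)},
\]
uniformly in $n$. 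This is exactly where the $\beta(1-a)$ inside $\fat$ in $n_6$ comes from.

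\emph{Step 3 (swapping).} For a fixed $g$, randomly swap $X_i\leftrightarrow X'_i$ independently. The quantity $\sum_i(\psi_g(X'_i)-\psi_g(X_i))$ becomes a sum of independent symmetric terms bounded by $|\psi_g(X'_i)-\psi_g(X_i)|$, and the sum of these bounds is at most $\sum_i(\psi_g(X_i)+\psi_g(X'_i))$. Conditioned on $Z$, we need the event that this sum is at least $\beta n/4$ while the total mass $M=\sum\psi_g$ over $Z$ satisfies $M\ge3\beta n/8$ and the first half carries at most $\beta n/8$. A Bernstein-type (multiplicative Chernoff) bound for this signed sum, which uses the total mass rather than $n$, gives probability $\exp(-c\beta n)$. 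A union bound over $\mathcal N$ then gives failure probability at most $2|\mathcal N|e^{-c\beta n}\le\delta$ once $n\ge\frac{c_1}{\beta}\bigl(\fat(\F,c_2\eta_6\beta(1-a))\ln\frac{c_3}{\eta_6\beta(1-a)}+\ln\frac1\delta\bigr)$.

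The main obstacle is Step 3. A plain Hoeffding bound would give $e^{-c\beta^2n}$, losing a factor of $1/\beta$ in the sample size. I will first try the relative-deviation form: bound the variance proxy by $\sum_i(\psi_g(X_i)+\psi_g(X'_i))$ and handle the large-mass case by rescaling thresholds. If that fails, I would fall back to thresholding $\psi_g$ at $1/2$, reducing to a $\{0,1\}$-valued count. There the exact permutation bound $2^{-k}$ applies for the probability that all $k\gtrsim\beta n$ marked points land in the ghost half. The cover slack can be charged to this count by Markov, at the cost of constants.
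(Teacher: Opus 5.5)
The paper gives no proof of this lemma. It imports the statement as a black box from HKS (their Theorem~9, attributed to Mendelson's entropy bound) and only checks that its own application meets the hypotheses: restriction to $\GS$, a finite domain for measurability, and prefix alignment for the sample length. Your proposal is correct, and it is a self-contained derivation along the route that attribution points to. It uses a ghost sample, then an empirical-$L_1$ cover of the ramp surrogate $\psi_f$ at scale of order $(1-a)\eta_6\beta$, whose size Mendelson's bound controls independently of $n$, and then a mass-sensitive swapping bound. What the black box buys the paper is brevity. What your route buys is that it makes visible why $\beta(1-a)$ sits inside $\fat$, and why the rate is $1/\beta$ rather than $1/\beta^2$.

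Your first attempt at Step~3 already works, so you do not need the fallback. Set $c_i=|\psi_g(X_i)-\psi_g(X_i')|\in[0,1]$ and $M=\sum_i(\psi_g(X_i)+\psi_g(X_i'))$.
\begin{itemize}
\item Under swaps, the difference (second-half mass minus first-half mass) is distributed as $D=\sum_i\epsilon_i c_i$.
\item Since $\sum_i c_i^2\le\sum_i c_i\le M$, Hoeffding gives $\Pr[D\ge t]\le e^{-t^2/(2M)}$.
\item The event forces $D\ge\beta n/4$. It also forces $D\ge M-\beta n/4$, because $D=M-2\cdot(\text{first-half mass})$.
\item If $M\le\beta n$, the first constraint gives probability at most $e^{-\beta n/32}$.
\item If $M>\beta n$, then $D\ge 3M/4$, giving $e^{-9M/32}\le e^{-\beta n/32}$.
\end{itemize}
Choose the cover as a function of the empirical measure of $Z$, so that it is invariant under swaps.

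Two bookkeeping points remain.

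First, the prescribed $n_6$ has no free constant term. You, however, need $n\ge 8\ln 2/\beta$ for the Chernoff step and an extra additive $\ln 2$ from symmetrization, and these cannot be absorbed when $\fat=0$ and $\delta\to1$. The fix is to take $c_2\le 1/2$:
\begin{itemize}
\item If $\fat(\F,c_2\eta_6\beta(1-a))=0$, the conclusion is trivial by the single-point diameter lemma (Lemma~\ref{lem:diameter}), since $|f-f^{*}|\le 2c_2\eta_6\beta(1-a)<\eta_6$ everywhere.
\item If $\fat\ge1$, a large $c_3$ lets the $\fat$ term dominate the constants.
\item Shrinking $c_2$ or enlarging $c_1,c_3$ is harmless, because extra samples only strengthen the conclusion (prefix argument).
\end{itemize}

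Second, the conditioning in Step~1 needs the bad $f$ to be selectable measurably from $X$, or the supremum event to be measurable. This is the standing measurability assumption of HKS that the quoted statement suppresses. It is automatic in the paper's only use, on the finite domain $[m]$.
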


(The standing measurability assumption of \citealp{hanneke2019sample} is
automatic in our application, which only ever instantiates
Lemma~\ref{lem:k6} on finite restriction classes over $[m]$; see
Section~\ref{sec:weak}.)

\section{The compression scheme}
\label{sec:scheme}

\subsection{Parameters}
\label{sec:params}

Let $c_1,c_2,c_3$ be the universal constants of Lemma~\ref{lem:k6}; set
$C_2:=\max\{1,c_2\}$, $\bar c_3:=\max\{1,c_3\}$, and
\[
c':=\frac{c_2}{128\,C_2}\;\le\;\frac1{128},
\qquad
d:=\fat(\F,c'\alpha).
\]

\begin{center}
\begin{tabular}{lll}
\toprule
parameter & value & role\\
\midrule
$\eta$ & $\alpha/(12C_2)$ & final error $\eta/2$ on good points\\
$\eps$ & $\alpha^2/64$ & bad-point fraction\\
$\gamma$ & $1/8$ & boosting edge (fixed)\\
$\eta_6$ & $\eta/2$ & Lemma~\ref{lem:k6} scale $=$ MedBoost threshold\\
$\beta$ & $3/8=\tfrac12-\gamma$ & weak error rate\\
$a$ & $1/2$ & label tolerance $a\eta_6=\eta/4$\\
$\delta_6$ & $1/2$ (per round) & existence only\\
$J$ & $\lceil 16/\eta\rceil$ & label grid $\{0,\frac1J,\dots,1\}$; error $\le\eta/32$\\
selection radius & $\eta/16$ & gives $|h-f^{*}|\le\frac{3\eta}{32}<\frac{\eta}{4}$ on stored points\\
$n$ & $\bigl\lceil\frac{8c_1}{3}\bigl(d\ln\frac{32\bar c_3}{3\eta}+\ln2\bigr)\bigr\rceil$ & per-round sample count\\
$\lambda$ & $(35/27)^{1/16}\sqrt{20/21}<1$ & per-round contraction (Lemma~\ref{lem:contraction})\\
$T$ & $\lceil\ln(1/\eps)/(-\ln\lambda)\rceil$ & rounds; $\lambda^{T}\le\eps$\\
$N$ & $64T$ & weight-quantization denominator; $\TV\le\frac1{128}=\frac{\gamma}{16}$\\
\bottomrule
\end{tabular}
\end{center}

The key scale identity is
\begin{equation}
\label{eq:scale}
c_2\,\eta_6\,\beta(1-a)
= c_2\cdot\frac{\eta}{2}\cdot\frac38\cdot\frac12
=\frac{3c_2\eta}{32}
=\frac{c_2\,\alpha}{128\,C_2}
= c'\alpha .
\end{equation}

\subsection{Fixed components}
\label{sec:fixed}

The scheme fixes once and for all (independently of any sample): a well-ordering
of $\F$ (used by the selection rule $\Sel$, which picks the least element of a
nonempty subset of $\F$), a well-ordering of $\X$ (canonical ordering of stored
points), the lexicographic order on tuples $[m]^n$ (choice of sampling tuples),
and all quantization tie-breaking rules (ties to the smaller value; largest
remainders broken by round index; lower weighted median). Well-orderings exist
by the well-ordering theorem (a use of the axiom of choice, dispensable when
$\F$ is countable); Definition~\ref{def:compression} imposes no computability
or measurability requirements on $(\kappa,\rho)$.

\subsection{Compression map $\kappa_\alpha$}
\label{sec:kappa}

Given $S$:
\begin{enumerate}
\item \textbf{Near-minimizer.} Let $f^{*}$ be the first element of $\F$ (in the
fixed well-order) with $L_2(f^{*},S)\le\opt(S)+\alpha/8$; nonemptiness is by the
definition of the infimum.
\item \textbf{Synthetic target.} Set $y_i^{*}:=f^{*}(x_i)$. These values are
used only inside the compressor; they are never stored as labels.
\item \textbf{Boosting.} Run MedBoost (Lemma~\ref{lem:kegl}) on
$((x_i,y_i^{*}))_i$ with parameters $(T,\gamma,\eta)$. In round $t$, take the
lexicographically least K6-good tuple in $[m]^n$ (Lemma~\ref{lem:weak}
guarantees existence), let $q_x\in\{0,\frac1J,\dots,1\}$ be the grid point
nearest to $f^{*}(x)$ (ties to the smaller) for each distinct $x$ in the tuple's
support, and let
\[
h_t=\Sel\bigl\{f\in\F:\ |f(x)-q_x|\le\eta/16 \text{ for all $x$ in the round-$t$
support}\bigr\}
\]
(nonempty: $f^{*}$ is a witness). If
$e_t:=P_t\{i:|h_t(x_i)-f^{*}(x_i)|>\eta/2\}=0$, stop (\emph{infinite branch});
otherwise compute $\alpha_t$ and update $P_{t+1}$ per Lemma~\ref{lem:kegl}.
\item \textbf{Output, infinite branch} (some round has $e_t=0$): store, for each
distinct $x$ in the final round's support, the original example
$(x_{i_x},y_{i_x})$ with $i_x=\min\{i:x_i=x\}$; the bit string holds one mode
bit and the grid indices $j_x\in\{0,\dots,J\}$.
\item \textbf{Output, finite branch} (all $T$ rounds ran): store one original
representative (least index, as above) of each distinct $x$ appearing in any
round's support; the bit string holds the mode bit, each stored point's grid
index $j_x$ and its \emph{incidence vector} in $\{0,1\}^T$ (which rounds'
supports contain $x$), and quantized weights $n_1,\dots,n_T$ obtained from
Lemma~\ref{lem:quantization}. Repeated draws are stored once; the selection
constraint is a max-type constraint and depends only on supports.
\end{enumerate}

\subsection{Reconstruction map $\rho_\alpha$}
\label{sec:rho}

If $d=0$, output the fixed function $f_0$ (the least element of $\F$); the
compression is empty (see Lemma~\ref{lem:diameter}). Otherwise decode $U=|S'|$
from the stored points and all field lengths from the fixed
$(\F,\alpha)$-dependent constants $J,T,N$. In the infinite mode, output
$\Sel\{f\in\F:|f(x)-q_x|\le\eta/16\ \forall x\in S'\}$. In the finite mode,
recover for each round $t$
\[
h_t=\Sel\bigl\{f\in\F:\ |f(x)-q_x|\le\eta/16 \text{ for all stored $x$ with
incidence bit $1$ at round } t\bigr\},
\]
and output the quantized-weight lower median
$\hat f(x)=\Med\bigl(h_1(x),\dots,h_T(x);\,n_1,\dots,n_T\bigr)$.

\emph{Malformed encodings.} If the mode bit is missing; the bit-string length
does not match the fixed format of its mode; some $j_x>J$; in finite mode some
$n_t>N$ or $\sum_t n_t\ne N$; stored points are out of canonical order or
contain a repeated $x$; some round's incidence support is empty; or some
selection constraint set is empty---output the zero function. On legal outputs
of $\kappa_\alpha$ none of these branches triggers ($f^{*}$ witnesses every
constraint set nonempty), so $\rho_\alpha$ is a total function. The
reconstructor uses only the stored $x$'s, the $q_x$'s, the incidence vectors and
the $(n_t)$; it never uses $f^{*}$, unsaved sample points, $m$, or $\opt(S)$.
(The stored labels $y$ are not used by $\rho_\alpha$; storing original labeled
records is what the protocol requires, and the stored points are the only legal
carriers of the $x$'s---in general $\X$ is infinite, so points cannot be encoded
in bits; see Remark~\ref{rem:whystore}.)

\section{Analysis}
\label{sec:analysis}

\begin{theorem}[Main theorem, formal]
\label{thm:main}
Let $\F\subseteq[0,1]^{\X}$ be nonempty and $0<\alpha\le1$ with
$d=\fat(\F,c'\alpha)<\infty$, where $c'=c_2/(128\max\{1,c_2\})$. The pair
$(\kappa_\alpha,\rho_\alpha)$ of Section~\ref{sec:scheme} is a deterministic
compression scheme such that for every $m\ge1$ and every
$S\in(\X\times[0,1])^m$:
\begin{enumerate}
\item $S'$ is a sub-multiset of the original labeled examples of $S$;
\item $\rho_\alpha(S',B):\X\to[0,1]$;
\item $L_2(\rho_\alpha(S',B),S)\le\opt(S)+\alpha$;
\item $|S'|+|B|\le C\,d\,\log^3(2/\alpha)$ for a universal constant $C$; and
$|S'|+|B|=0$ when $d=0$.
\end{enumerate}
\end{theorem}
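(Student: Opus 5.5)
The proof splits into three parts: the protocol properties (items 1, 2 and the $d=0$ case of item 4), the size count (item 4), and the loss guarantee (item 3), which carries the real content.

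\textbf{Protocol properties.} Items 1 and 2 hold by construction. Every stored record is an original pair $(x_{i_x},y_{i_x})$ with $i_x$ the least index carrying $x$. Every output is either a member of $\F$ (a value of $\Sel$), a weighted median of values in $[0,1]$, or the zero function.

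\textbf{The case $d=0$.} Here I would prove a diameter lemma. If $\fat(\F,c'\alpha)=0$, then every $f,g\in\F$ satisfy $\sup_x|f(x)-g(x)|<2c'\alpha\le\alpha/64$. Otherwise the single point where they differ would be $c'\alpha$-shattered, with witness the midpoint of the two values. Then for the fixed $f_0$ and any $f$, we get $\|f_0-y\|_S\le\|f-y\|_S+\alpha/64$. Squaring and taking the infimum over $f$ gives $L_2(f_0,S)\le\opt(S)+\alpha$.

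\textbf{Size count for $d\ge1$.} There are $T=O(\log(1/\alpha))$ rounds, because $\ln(1/\eps)=O(\log(2/\alpha))$ and $\lambda$ is a constant. Each round's support has at most $n=O(d\log(1/\alpha))$ points, so $U\le Tn=O(d\log^2(2/\alpha))$ points are stored. Each stored point carries $O(\log J)+T=O(\log(2/\alpha))$ bits. The weights cost $T\lceil\log_2(N+1)\rceil=O(\log(2/\alpha)\log\log(2/\alpha))$ bits. The total is $O(d\log^3(2/\alpha))$.

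\textbf{Loss guarantee: weak learning.} I would first establish that each round's hypothesis is weak. Instantiate Lemma~\ref{lem:k6} with $P=P_t$ on the finite class of restrictions to $[m]$, target $f^{*}$, and the tabulated parameters. By \eqref{eq:scale}, the fat dimension appearing there is exactly $d$, and its sample count is at most $n$. I would use a prefix argument here: the good event for $n_6\le n$ draws is implied by the event for the first $n_6$ coordinates of an $n$-tuple. Since $\delta_6=1/2>0$, a K6-good tuple exists.

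The selection rule gives $|h_t-f^{*}|\le \eta/32+\eta/16<\eta/4=a\eta_6$ on the support. Hence $P_t(|h_t-f^{*}|>\eta/2)\le 3/8=\tfrac12-\gamma$, which is the weak condition of Lemma~\ref{lem:kegl}. In the infinite branch $e_t=0$, so the output matches $f^{*}$ within $\eta/2$ at every sample point.

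\textbf{Loss guarantee: contraction.} In the finite branch, the standard AdaBoost computation bounds each factor of Kégl's product by a constant $\lambda<1$ uniformly in the error $e_t\le 3/8$ (Lemma~\ref{lem:contraction}). So after $T$ rounds the fraction of points where $Q^{\pm}_{\gamma/2}$ deviates from $f^{*}$ by more than $\eta/2$ is at most $\lambda^T\le\eps$.

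\textbf{Main obstacle: quantization stability.} I expect this step to be the hardest. One must normalize the weights $\alpha_t$ by their sum, which is finite and positive in this branch, and round them by largest remainders to integers $n_t$ summing to $N$. This keeps the total-variation distance between the two weight distributions at most $T/N=1/64<\gamma/2$. Then one shows the quantized lower median lies in $[Q^-_{\gamma/2},Q^+_{\gamma/2}]$ of the exact weights, handling the strict-versus-nonstrict threshold carefully. The argument is that moving mass $\le\gamma/2$ cannot push the cumulative weight across $1/2\pm\gamma/2$. Consequently $|\hat f-f^{*}|\le\eta/2$ outside a set of fraction at most $\eps$.

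\textbf{Final computation.} On good points the pointwise error is at most $\eta/2$, and on bad points it is at most $1$. So $\|\hat f-f^{*}\|_S^2\le\eta^2/4+\eps$, which gives $\|\hat f-f^{*}\|_S\le\eta/2+\alpha/8\le\alpha/4$. By the triangle inequality,
\[
L_2(\hat f,S)\le\bigl(\sqrt{L_2(f^{*},S)}+\|\hat f-f^{*}\|_S\bigr)^2\le L_2(f^{*},S)+2\cdot\tfrac{\alpha}{4}+\tfrac{\alpha^2}{16}.
\]
Since $L_2(f^{*},S)\le 1$, this yields at most $\opt(S)+\alpha/8+\alpha/2+\alpha/16\le\opt(S)+\alpha$. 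The same bound holds a fortiori in the infinite branch.
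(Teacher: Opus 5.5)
Your proposal is correct and follows the paper's proof essentially step for step: the weak learner on the restriction class with prefix alignment, constant per-round contraction of K\'egl's factor, normalized largest-remainder weight quantization with TV stability of the median, and the triangle-inequality closure with the cross term bounded via $L_2(f^{*},S)\le 1$. Your seminorm-based $d=0$ argument and your looser TV bound $T/N$ are harmless variants; two claims need a one-line fix. On the restriction class the relevant dimension is only $\le d$, not exactly $d$; that is the direction you need, and it holds because a fat-shattered index set must have distinct $x_i$. The infinite-branch claim that $e_t=0$ gives $\eta/2$-accuracy at every sample point needs the observation that every $P_t$ has full support.
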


The proof occupies the rest of this section.

\subsection{The degenerate case $d=0$}

\begin{lemma}[Single-point diameter]
\label{lem:diameter}
If $\fat(\F,t)=0$ for some $t>0$, then $|f(x)-g(x)|\le 2t$ for all $x\in\X$ and
$f,g\in\F$.
\end{lemma}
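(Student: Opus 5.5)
We argue by contraposition. Suppose some $x\in\X$ and $f,g\in\F$ satisfy $|f(x)-g(x)|>2t$. We will show that the singleton $\{x\}$ is $t$-shattered, so $\fat(\F,t)\ge1$, contradicting $\fat(\F,t)=0$. The only ingredient is Definition~\ref{def:fat} with $k=1$. A single point is $t$-shattered exactly when there is a witness $r\in\mathbb{R}$ and functions realizing both sign patterns, namely some member with value $\ge r+t$ at $x$ and some member with value $\le r-t$ at $x$.

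The construction is as follows. Without loss of generality $f(x)>g(x)$, so $f(x)-g(x)>2t$. Take the midpoint witness $r:=\tfrac12\bigl(f(x)+g(x)\bigr)$. Then
\[
f(x)-r=\tfrac12\bigl(f(x)-g(x)\bigr)>t
\qquad\text{and}\qquad
r-g(x)=\tfrac12\bigl(f(x)-g(x)\bigr)>t.
\]
Hence $f$ realizes $\sigma=+1$ and $g$ realizes $\sigma=-1$. Since $f(x),g(x)\in[0,1]$, the witness $r$ also lies in $[0,1]$, so the argument is consistent with the $[0,1]$ witness convention noted in the definition.

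It remains to confirm the convention that $\fat(\F,t)=0$ means no nonempty set is $t$-shattered; the empty set is trivially shattered and has cardinality $0$. The exhibited one-point set then gives the contradiction, and we conclude $|f(x)-g(x)|\le 2t$ for all $x$, $f$, $g$.

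There is no real obstacle here. The only point requiring care is the direction of the inequalities: the definition uses non-strict inequalities $\ge r+t$ and $\le r-t$, and the strict gap $>2t$ gives these with room to spare. The $d=0$ branch of the scheme then follows, since with $t=c'\alpha\le\alpha/128$ every $f\in\F$ is within $2t$ of $f_0$ pointwise. That consequence, however, belongs to the later use of the lemma, not to this proof.
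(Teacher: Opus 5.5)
Your proof is correct and follows essentially the same route as the paper: assume $f(x)-g(x)>2t$, take the midpoint witness $r=\tfrac12\bigl(f(x)+g(x)\bigr)$, and observe that $f$ and $g$ realize the two sign patterns on $\{x\}$. This shatters $\{x\}$ at scale $t$ and contradicts $\fat(\F,t)=0$.
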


\begin{proof}
If $f(x)-g(x)>2t$ for some $x,f,g$, put $r=\frac{f(x)+g(x)}2$. Then
$f(x)>r+t$ and $g(x)<r-t$, so $f$ realizes the pattern $\sigma=+1$ and $g$ the
pattern $\sigma=-1$ in Definition~\ref{def:fat}, and $\{x\}$ is $t$-shattered;
contradiction.
\end{proof}

If $d=0$: with empty compression, $\rho_\alpha\equiv f_0$. Fix $f^{*}$ with
$L_2(f^{*},S)\le\opt+\alpha/8$. By Lemma~\ref{lem:diameter},
$|f_0(x_i)-f^{*}(x_i)|\le2c'\alpha$ pointwise, and since
$f_0,f^{*},y_i\in[0,1]$,
\[
\bigl|(f_0(x_i)-y_i)^2-(f^{*}(x_i)-y_i)^2\bigr|
\le 2\,|f_0(x_i)-f^{*}(x_i)|\le 4c'\alpha\le\frac{\alpha}{32},
\]
using $c'\le1/128$. Hence
$L_2(f_0,S)\le\opt+\frac{\alpha}8+\frac{\alpha}{32}=\opt+\frac{5\alpha}{32}
<\opt+\alpha$. From now on $d\ge1$.

\subsection{From Lemma~\ref{lem:k6} to the MedBoost weak learner}
\label{sec:weak}

Fix $S$, $f^{*}$, and a distribution $P$ on $[m]$. Consider the restriction
class $\GS:=\{i\mapsto f(x_i):f\in\F\}\subseteq[0,1]^{[m]}$ with target
$g^{*}(i)=f^{*}(x_i)$.

\paragraph{Restriction legality.}
If $A\subseteq[m]$ is $t$-fat-shattered by $\GS$, no two indices $i\ne j\in A$
can share the same $x_i=x_j$: the mixed sign patterns $(+,-)$ and $(-,+)$ on
$\{i,j\}$ would force a common function value to lie both above one threshold
$+t$ and below the other $-t$, giving $r_j-r_i\ge2t$ and $r_i-r_j\ge2t$
simultaneously---impossible for $t>0$. Hence the $x_i$, $i\in A$, are $|A|$
distinct points, shattered by $\F$ with the same witness, so
$\fat(\GS,t)\le\fat(\F,t)$. Moreover, all events involved in
Lemma~\ref{lem:k6} over the finite domain $[m]$ are trivially measurable, and
the lemma holds for arbitrary distributions, in particular for every discrete
$P_t$ on $[m]$.

\paragraph{Prefix alignment.}
Our $n$ may exceed the length $n_0$ that Lemma~\ref{lem:k6} prescribes for
$\GS$ with the original constant $c_3$; indeed $n\ge n_0$ termwise, since
$d\ge\fat(\GS,c'\alpha)$ and $\bar c_3\ge c_3$, and if
$\ln\frac{32c_3}{3\eta}\le0$ then the corresponding term of $n_0$ is
nonpositive while $\eta\le1/12$ makes $\frac{32\bar c_3}{3\eta}\ge128$, so the
new logarithm is strictly positive; in either case $n\ge n_0$. Apply
Lemma~\ref{lem:k6} to the first $n_0$ entries of a length-$n$ i.i.d.\ tuple
(whose marginal is i.i.d.\ $P^{n_0}$): any $f$ satisfying the tolerance
condition on all $n$ entries satisfies it on the first $n_0$, so a K6-good
prefix implies the one-sided implication we need for the whole tuple.

\begin{lemma}[Weak learner adaptation]
\label{lem:weak}
Instantiate Lemma~\ref{lem:k6} with $\eta_6=\eta/2$, $\beta=3/8$, $a=1/2$,
$\delta_6=1/2$. Then a length-$n$ i.i.d.\ $P_t$-tuple of indices satisfies,
with probability at least $1/2$:
\begin{equation}
\label{eq:good}
\text{every } f\in\F \text{ with }
\max_{\ell\le n}|f(x_{I_\ell})-f^{*}(x_{I_\ell})|\le\eta/4
\text{ satisfies }
P_t\{i:|f(x_i)-f^{*}(x_i)|>\eta/2\}\le 3/8 .
\end{equation}
In particular such (\emph{K6-good}) tuples exist. Furthermore, with
$q_x$ the nearest grid point to $f^{*}(x)$ (so $|q_x-f^{*}(x)|\le\frac1{2J}\le
\frac{\eta}{32}$) and
$h=\Sel\{f:\ |f(x_{I_\ell})-q_{x_{I_\ell}}|\le\eta/16\ \forall\ell\}$
(nonempty, witnessed by $f^{*}$), we get
$|h-f^{*}|\le\frac{\eta}{16}+\frac{\eta}{32}=\frac{3\eta}{32}<\frac{\eta}4$
on the tuple, hence by \eqref{eq:good}
\[
P_t\{i:|h(x_i)-f^{*}(x_i)|>\eta/2\}\;\le\;\frac38=\frac12-\gamma,
\]
i.e.\ $h$ is the $(\eta/2,\gamma)$-weak hypothesis required by
Lemma~\ref{lem:kegl} for the synthetic labels $y_i^{*}=f^{*}(x_i)$.
\end{lemma}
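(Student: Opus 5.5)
The proof applies Lemma~\ref{lem:k6} to the restriction class $\GS$ on the finite domain $[m]$, with distribution $P_t$, target $g^{*}(i)=f^{*}(x_i)$, and parameters $\eta_6=\eta/2$, $\beta=3/8$, $a=1/2$, $\delta=1/2$. There is no measurability issue, since the domain is finite. By the scale identity \eqref{eq:scale}, the dimension entering $n_0$ is $\fat(\GS,c'\alpha)$. The restriction-legality paragraph bounds this by $\fat(\F,c'\alpha)=d$. With $\beta=3/8$ the leading factor is $c_1/\beta=8c_1/3$, and the logarithm is $\ln\frac{c_3}{\eta_6\beta(1-a)}=\ln\frac{32c_3}{3\eta}$. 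The prefix-alignment paragraph then gives $n\ge n_0$.

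Apply the lemma to the first $n_0$ coordinates of an i.i.d.\ length-$n$ tuple drawn from $P_t$. With probability at least $1/2$ the prefix is good, meaning every $g\in\GS$ with $\max_{\ell\le n_0}|g(I_\ell)-g^{*}(I_\ell)|\le a\eta_6=\eta/4$ satisfies $P_t(|g-g^{*}|>\eta/2)\le 3/8$. Now take any $f\in\F$ satisfying the hypothesis of \eqref{eq:good} on all $n$ entries. It satisfies the hypothesis on the prefix in particular, and its restriction $g(i)=f(x_i)$ belongs to $\GS$. The conclusion translates verbatim into \eqref{eq:good}, so \eqref{eq:good} holds with probability at least $1/2>0$. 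A finite probability space with positive mass on an event has an outcome in that event, so K6-good tuples exist in $[m]^n$.

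For the hypothesis $h$, the constraint set contains $f^{*}$. This holds because $|f^{*}(x)-q_x|\le 1/(2J)\le\eta/32\le\eta/16$, where $J\ge16/\eta$ gives $1/(2J)\le\eta/32$. The set is therefore nonempty and $\Sel$ is defined. On each tuple entry, the triangle inequality gives $|h-f^{*}|\le\eta/16+\eta/32=3\eta/32<\eta/4$. Hence $h$ meets the hypothesis of \eqref{eq:good}, and $P_t\{i:|h(x_i)-f^{*}(x_i)|>\eta/2\}\le3/8=\frac12-\gamma$. This is exactly the $(\eta/2,\gamma)$-weak condition of Lemma~\ref{lem:kegl} for labels $y^{*}_i$.

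The only delicate point is the bookkeeping that $n\ge n_0$ together with the one-sided prefix implication. Both are already settled in the preceding paragraphs, so I do not expect a real obstacle. One more check is needed: the lemma requires $a\in[0,1)$ and $\eta_6,\beta\in(0,1)$. These hold because $\eta\le1/12$.
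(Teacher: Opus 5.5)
Your proposal is correct and follows the paper's proof step for step. You restrict to $\GS$, bound the dimension by $d$ via the scale identity \eqref{eq:scale} and restriction legality, get $n\ge n_0$ from prefix alignment with success probability $1-\delta_6=1/2$, and handle $h$ with the grid-plus-selection triangle inequality; you are slightly more explicit than the paper in carrying the $\GS$-conclusion back to $\F$ and in checking that the parameters lie in the ranges Lemma~\ref{lem:k6} requires.
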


\begin{proof}
The dimension appearing in Lemma~\ref{lem:k6} at these parameters is
$\fat(\GS,c_2\eta_6\beta(1-a))=\fat(\GS,c'\alpha)\le d$ by \eqref{eq:scale}
and restriction legality; the sample count is then at most our $n$ by prefix
alignment, and the displayed probability is $1-\delta_6=1/2$. The remaining
statements were proved in the paragraph above and in the display.
\end{proof}

Existence with positive probability suffices: the compressor picks the
lexicographically least K6-good tuple, deterministically. No union bound over
rounds is needed---after the round-$t$ tuple is fixed, $P_{t+1}$ is determined,
and round $t+1$ poses a fresh existence question. No randomness remains in the
scheme.

\subsection{Explicit contraction of the K\'egl factor}
\label{sec:contraction}

\begin{lemma}[Per-round contraction]
\label{lem:contraction}
Let a round have bad mass
$e=P_t\{|h_t-f^{*}|>\eta/2\}\in(0,3/8]$, and write $p=1-e$, $u=p-e\ge2\gamma
=\frac14$. Then the round's factor in Lemma~\ref{lem:kegl} satisfies
\[
F(u):=e^{\gamma A}\bigl(pe^{-A}+ee^{A}\bigr)\le
F(2\gamma)=\Bigl(\frac{35}{27}\Bigr)^{1/16}\sqrt{\frac{20}{21}}=:\lambda<1,
\qquad
A=\tfrac12\ln\frac{(1-\gamma)p}{(1+\gamma)e}>0 .
\]
Consequently, if all $T$ rounds have finite coefficients,
$\prod_{t=1}^{T}(\text{factor}_t)\le\lambda^{T}\le\eps$ by the choice of $T$.
\end{lemma}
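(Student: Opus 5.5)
The plan is to compute the round factor $F$ in closed form, show it is monotone in the bad mass $e$ on the allowed range $(0,3/8]$, and then evaluate it at the endpoint $e=3/8$, which corresponds to $u=2\gamma$.

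\textbf{Step 1: closed form.} Write $R:=\frac{(1-\gamma)p}{(1+\gamma)e}$, so that $e^{A}=\sqrt R$ and $e^{-A}=1/\sqrt R$. The bracket then simplifies:
\[
pe^{-A}+ee^{A}=\sqrt{pe}\Bigl(\sqrt{\tfrac{1+\gamma}{1-\gamma}}+\sqrt{\tfrac{1-\gamma}{1+\gamma}}\Bigr)=\frac{2\sqrt{pe}}{\sqrt{1-\gamma^2}} .
\]
The prefactor is $e^{\gamma A}=R^{\gamma/2}$. Multiplying the two gives
\[
F=K_\gamma\, p^{(1+\gamma)/2}\,e^{(1-\gamma)/2},
\]
where $K_\gamma>0$ depends only on $\gamma$. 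Here $p=1-e$, equivalently $u=1-2e$.

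\textbf{Step 2: monotonicity.} Set $g(e):=(1-e)^{(1+\gamma)/2}e^{(1-\gamma)/2}$. Its logarithmic derivative is
\[
\frac{1-\gamma}{2e}-\frac{1+\gamma}{2(1-e)},
\]
which is positive exactly when $e<\frac{1-\gamma}{2}=\frac7{16}$. Since $e\le\frac38<\frac7{16}$, $F$ is increasing in $e$ on $(0,3/8]$, i.e.\ decreasing in $u$ on $[2\gamma,1)$. Therefore $F(u)\le F(2\gamma)$, the value at $e=3/8$, $p=5/8$.

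\textbf{Step 3: evaluation at the endpoint.} With $\gamma=1/8$ we get $R=\frac{(7/8)(5/8)}{(9/8)(3/8)}=\frac{35}{27}$, so $e^{\gamma A}=(35/27)^{1/16}$. The bracket equals $2\sqrt{15/64}\big/\sqrt{63/64}=2\sqrt{5/21}=\sqrt{20/21}$. This reproduces the claimed value of $\lambda$.

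\textbf{Step 4: $A>0$ and $\lambda<1$.} For $A>0$: $R$ is decreasing in $e$, so $R\ge 35/27>1$ throughout the range. For $\lambda<1$ we need $(35/27)^{1/8}<21/20$. This follows from $\tfrac18\ln\frac{35}{27}\approx0.0324<\ln 1.05\approx0.0488$. To make it rigorous I would use $\ln(1+x)\le x$ on the left-hand side, giving $\tfrac18\ln\frac{35}{27}\le\tfrac18\cdot\tfrac{8}{27}=\tfrac1{27}$. On the right-hand side I would use $\ln(1+x)\ge x-x^2/2$, giving $\ln 1.05\ge 0.05-0.00125>\tfrac1{27}$.

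\textbf{Step 5: the product.} If all $T$ rounds have finite coefficients, then each round has $e_t\in(0,3/8]$: positivity holds because the infinite branch did not trigger, and the upper bound is Lemma~\ref{lem:weak}. Hence each factor is at most $\lambda$, the product is at most $\lambda^T$, and $\lambda^T\le\eps$ by the choice of $T$.

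The only delicate point is the monotonicity direction in Step 2, i.e.\ checking that the critical point $e=\frac{1-\gamma}{2}$ lies beyond $3/8$. Everything else is direct computation.
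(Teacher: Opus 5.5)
Your proof is correct and follows essentially the paper's route: a closed form for the round factor, the sign of its logarithmic derivative, and evaluation at the endpoint $e=3/8$. You work in $e$ with the power form $K_\gamma\,p^{(1+\gamma)/2}e^{(1-\gamma)/2}$, while the paper works in $u$ via $\operatorname{arctanh}$, which is the same computation under $u=1-2e$. You certify $\lambda<1$ through $\ln(1+x)\le x$ and $\ln(1+x)\ge x-x^2/2$ rather than by the paper's exact rational evaluation of $(21/20)^8-35/27$, and both are valid.
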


\begin{proof}
With $p=\frac{1+u}2$, $e=\frac{1-u}2$ one computes
$A=\operatorname{arctanh}u-\operatorname{arctanh}\gamma$, which is positive
since $u\ge2\gamma>\gamma$; in particular no round with $e\le3/8<\frac{1-\gamma}2$
has $A\le0$. Direct substitution gives
$pe^{-A}+ee^{A}=\sqrt{(1-u^2)/(1-\gamma^2)}$, hence
\[
\ln F(u)=\gamma\bigl(\operatorname{arctanh}u-\operatorname{arctanh}\gamma\bigr)
+\tfrac12\ln(1-u^2)-\tfrac12\ln(1-\gamma^2),
\qquad
\frac{d}{du}\ln F(u)=\frac{\gamma-u}{1-u^2}<0
\]
for $u\ge 2\gamma>\gamma$. So the maximum over the allowed range is at
$u=2\gamma=\frac14$, where evaluating with $\gamma=\frac18$ yields
$F(1/4)=(35/27)^{1/16}\sqrt{20/21}=\lambda$. Finally $\lambda<1$ if and only if
$\frac{35}{27}<(\frac{21}{20})^{8}$, and exactly,
\[
\Bigl(\frac{21}{20}\Bigr)^{8}-\frac{35}{27}
=\frac{125217202747}{691200000000}>0 .
\]
Numerically $\lambda=0.9918576972\ldots$, $-\ln\lambda=0.0081756\ldots$.
\end{proof}

\paragraph{The $\alpha_t=\infty$ branch.}
$P_1$ has full support, and each finite-coefficient update multiplies every
coordinate by a strictly positive factor, so every reached $P_t$ has full
support. Hence $e_t=0$ forces the bad index set itself to be empty:
$|h_t(x_i)-f^{*}(x_i)|\le\eta/2$ for \emph{all} $i\in[m]$. MedBoost then
returns $T$ copies of $h_t$, whose weighted median is $h_t$; the infinite-mode
reconstruction outputs the same $\Sel$-selected function (the constraint set
determined by the stored support and grid values is identical at compression
and reconstruction time, since the max-type constraint depends only on the
support). This holds for $t=1$ as well.

\subsection{Quantizing the normalized weights}
\label{sec:quantization}

\begin{lemma}[Normalized-weight quantization]
\label{lem:quantization}
Suppose all $\alpha_t<\infty$ and let $w_t=\alpha_t/\sum_s\alpha_s$. With
$N=64T$ there are integers $n_t\in\{0,\dots,N\}$ with $\sum_t n_t=N$ and
$|n_t/N-w_t|\le1/N$ for all $t$; consequently
$\TV(w,\tilde w)\le T/(2N)=\frac1{128}=\frac{\gamma}{16}$ for
$\tilde w_t=n_t/N$. Each $n_t$ takes
$b_w=\lceil\log_2(N+1)\rceil$ bits, independent of $m$.
\end{lemma}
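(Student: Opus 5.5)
The plan is to establish well-definedness of the normalized weights first, then produce an explicit rounding (largest-remainder), and finally derive the total-variation and bit-length bounds directly.

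\emph{Well-definedness.} In the finite branch every round has $e_t\in(0,3/8]$. By Lemma~\ref{lem:contraction} each coefficient satisfies $\alpha_t=A_t>0$ and is finite. Hence $\sum_s\alpha_s\in(0,\infty)$, and $w=(w_t)_{t\le T}$ is a probability vector with $w_t\ge0$ and $\sum_t w_t=1$. Normalizing is what makes quantization possible at all: the raw $\alpha_t$ are unbounded, but the weighted median depends only on the ratios of the weights. So replacing $\alpha_t$ by $w_t$ leaves the median and the quantiles $Q^{\pm}_s$ of Lemma~\ref{lem:kegl} unchanged.

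\emph{Construction of $n_t$.} Write $Nw_t=\lfloor Nw_t\rfloor+r_t$ with $r_t\in[0,1)$.
\begin{itemize}
\item Set $R:=N-\sum_t\lfloor Nw_t\rfloor=\sum_t r_t$. This is a nonnegative integer, and $R<T$ since each $r_t<1$.
\item Let $n_t=\lfloor Nw_t\rfloor+1$ for the $R$ indices with the largest remainders, breaking ties by round index as fixed in Section~\ref{sec:fixed}. Let $n_t=\lfloor Nw_t\rfloor$ for all other indices.
\end{itemize}
Then $\sum_t n_t=N$ and $|n_t-Nw_t|<1$, so $|n_t/N-w_t|\le 1/N$. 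Also $0\le n_t\le\lfloor Nw_t\rfloor+1$. If $w_t=1$ then $R=0$ and $n_t=N$; otherwise $\lfloor Nw_t\rfloor\le N-1$. In both cases $n_t\in\{0,\dots,N\}$. The selection is deterministic, which keeps $\kappa_\alpha$ a function.

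\emph{Total variation and bit length.}
\[
\TV(w,\tilde w)=\tfrac12\sum_t|w_t-n_t/N|\le \tfrac12\cdot T\cdot\tfrac1N=\tfrac{T}{2\cdot 64T}=\tfrac1{128}=\tfrac{\gamma}{16},
\]
using $\gamma=1/8$. Each $n_t\in\{0,\dots,N\}$ is encoded in $\lceil\log_2(N+1)\rceil$ bits. Since $N=64T$ and $T$ depend only on $\alpha$, this is independent of $m$.

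There is no real obstacle. The only points needing care are the two checked above: positivity and finiteness of every $\alpha_t$ in the finite branch, so that $w$ is defined, and the edge case $w_t=1$, which keeps $n_t\le N$.
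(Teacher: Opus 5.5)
Your proposal is correct and follows the paper's proof essentially verbatim. Like the paper, you normalize the weights (weighted medians and quantiles are invariant under rescaling), round with the largest-remainder method breaking ties by round index, and bound the total variation by $T/(2N)=1/128$. Your explicit checks are slightly more careful than the paper's one-line ``since $w_t\le1$'': you verify that every $\alpha_t>0$ in the finite branch (so $w$ is well defined), and that the edge case $w_t=1$ forces $R=0$, so $n_t$ cannot reach $N+1$.
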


\begin{proof}
Weighted quantiles and medians are invariant under common rescaling of the
weights, so only the normalized vector matters; note the raw $\alpha_t$ admit
no uniform bound ($\alpha_t\to\infty$ as $e_t\to0$), which is why the
normalization is essential. Use the largest-remainder method: set
$n_t=\lfloor Nw_t\rfloor$ and distribute the deficit
$D=N-\sum_t\lfloor Nw_t\rfloor\in\{0,\dots,T\}$ by adding $1$ to the $D$
coordinates with largest fractional parts $Nw_t-\lfloor Nw_t\rfloor$ (ties by
round index). Then $n_t\in\{\lfloor Nw_t\rfloor,\lfloor Nw_t\rfloor+1\}$, so
$|n_t/N-w_t|\le1/N$; also $0\le n_t\le N$ since $w_t\le1$, and $\sum_tn_t=N$
by construction.
\end{proof}

\subsection{Quantile stability and the strict/nonstrict interface}
\label{sec:quantile}

For values $z_1,\dots,z_T$ and probability weights $w$, define the
\emph{nonstrict} quantile endpoints
\[
R^{+}_s=\min\{z: w\{z_t\le z\}\ge\tfrac12+s\},
\qquad
R^{-}_s=\max\{z: w\{z_t\ge z\}\ge\tfrac12+s\},
\]
(the $\min/\max$ range over the support values). A $\tilde w$-weighted (lower)
median $M$ satisfies $\tilde w\{z_t\le M\}\ge\frac12$ and
$\tilde w\{z_t\ge M\}\ge\frac12$.

\begin{lemma}[TV stability]
\label{lem:quantile}
{\rm(a)} If $\TV(w,\tilde w)<s$, then every $\tilde w$-weighted median $M$
satisfies $R^{-}_s\le M\le R^{+}_s$ (quantiles w.r.t.\ $w$).
{\rm(b)} With the strict-form endpoints of Lemma~\ref{lem:kegl}
(\citealp{hanneke2019sample}),
\[
K^{+}_s=\min\{z_j: w\{z_t> z_j\}<\tfrac12-s\},
\qquad
K^{-}_s=\max\{z_j: w\{z_t< z_j\}<\tfrac12-s\},
\]
one has $K^{-}_s\le R^{-}_s$ and $R^{+}_s\le K^{+}_s$. In particular, in our
scheme, where $\TV(w,\tilde w)\le\gamma/16<\gamma/2=s$,
\[
K^{-}_{\gamma/2}\;\le\;R^{-}_{\gamma/2}\;\le\;M\;\le\;R^{+}_{\gamma/2}\;\le\;K^{+}_{\gamma/2}:
\]
the quantized-weight median lies inside the exact-weight quantile interval
controlled by Lemma~\ref{lem:kegl}.
\end{lemma}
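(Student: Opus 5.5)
Both parts are elementary order/mass arguments on the finite value set $\{z_1,\dots,z_T\}$. I will use one basic fact: for any event $E\subseteq[T]$, $|w(E)-\tilde w(E)|\le\TV(w,\tilde w)$.

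\textbf{Part (a), upper inequality.} Let $M$ be a $\tilde w$-median, so $\tilde w\{z_t\le M\}\ge\frac12$. Suppose for contradiction that $M>R^{+}_s$, i.e.\ $R^{+}_s<M$. By the definition of $R^{+}_s$ we have $w\{z_t\le R^{+}_s\}\ge\frac12+s$. Since $\{z_t\le R^{+}_s\}\subseteq\{z_t<M\}$, this gives $w\{z_t<M\}\ge\frac12+s$. Hence $\tilde w\{z_t<M\}>\frac12+s-s=\frac12$, using $\TV<s$ strictly, and therefore $\tilde w\{z_t\ge M\}<\frac12$. This contradicts the median property $\tilde w\{z_t\ge M\}\ge\frac12$. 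I also need $R^{+}_s$ to be well defined: the set in its definition contains $\max_t z_t$, since the full mass $1$ is at least $\frac12+s$ for $s\le\frac12$. (Here $s=\gamma/2$.)

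\textbf{Part (a), lower inequality.} This is symmetric. Suppose $M<R^{-}_s$. Then $w\{z_t>M\}\ge w\{z_t\ge R^{-}_s\}\ge\frac12+s$. So $\tilde w\{z_t>M\}>\frac12$, hence $\tilde w\{z_t\le M\}<\frac12$, again a contradiction.

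\textbf{Part (b), $R^{+}_s\le K^{+}_s$.} Let $z_j=K^{+}_s$, so $w\{z_t>z_j\}<\frac12-s$. Taking complements, $w\{z_t\le z_j\}>\frac12+s\ge\frac12+s$, so $z_j$ belongs to the set whose minimum is $R^{+}_s$. Hence $R^{+}_s\le z_j=K^{+}_s$. Nonemptiness of the set defining $K^{+}_s$ is witnessed by $z_j=\max_t z_t$, because $0<\frac12-s$.

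\textbf{Part (b), $K^{-}_s\le R^{-}_s$.} Symmetrically, let $z_j=K^{-}_s$, so $w\{z_t<z_j\}<\frac12-s$. Then $w\{z_t\ge z_j\}>\frac12+s$, which places $z_j$ in the set whose maximum is $R^{-}_s$. Hence $K^{-}_s\le R^{-}_s$.

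\textbf{Assembling the chain.} In our scheme, Lemma~\ref{lem:quantization} gives $\TV\le\gamma/16<\gamma/2=s$. Applying (a) and then (b) yields the displayed chain $K^{-}_{\gamma/2}\le R^{-}_{\gamma/2}\le M\le R^{+}_{\gamma/2}\le K^{+}_{\gamma/2}$.

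\textbf{Main obstacle.} The only delicate point is the strict-versus-nonstrict bookkeeping. It matters in two places:
\begin{itemize}
\item In (a), the strict bound $\TV<s$ is exactly what makes $\tilde w\{z_t<M\}$ exceed $\frac12$ strictly, which is needed for the contradiction.
\item In (b), the strict inequality in $K^{\pm}_s$ converts to the nonstrict one in $R^{\pm}_s$ only through complementation. This works because the events are taken over the same finite support values.
\end{itemize}
I will check that nothing else requires care, in particular that ties among the $z_t$ do not matter, since every set involved is defined by weight thresholds.
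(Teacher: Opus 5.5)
Your proof is correct and follows the paper's argument: in (a) you combine the bound $|w(E)-\tilde w(E)|\le\TV(w,\tilde w)<s$ with monotonicity of the events $\{z_t\le\cdot\}$ to contradict the median property, and in (b) you complement the strict condition defining $K^{\pm}_s$ to place $K^{\pm}_s$ in the nonstrict candidate set for $R^{\pm}_s$. Beyond the paper, you also check that all four defining sets are nonempty (using $s<\tfrac12$), which the paper leaves implicit.
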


\begin{proof}
(a) $w\{z_t\le R^{+}_s\}\ge\frac12+s$ implies
$\tilde w\{z_t\le R^{+}_s\}>\frac12$; if $M>R^{+}_s$ then
$\tilde w\{z_t\ge M\}\le\tilde w\{z_t>R^{+}_s\}<\frac12$, contradicting the
median property. The lower bound is symmetric.
(b) The strict condition $w\{z_t>z_j\}<\frac12-s$ is equivalent to
$w\{z_t\le z_j\}>\frac12+s$, which implies the nonstrict condition defining
$R^{+}_s$; a smaller candidate set has a larger minimum, so $R^{+}_s\le K^{+}_s$,
and symmetrically $K^{-}_s\le R^{-}_s$. Atoms whose cumulative mass equals
$\frac12+s$ exactly can move the nonstrict endpoints only \emph{inward}, never
outward past the strict ones, and the strict slack $\TV<s$ in (a) is what
prevents the quantized median from crossing an endpoint.
\end{proof}

\subsection{Empirical approximation of $f^{*}$ and the agnostic closure}
\label{sec:closure}

\paragraph{Approximation.}
In the infinite branch, the full-support argument of
Section~\ref{sec:contraction} gives $|\hat f(x_i)-f^{*}(x_i)|\le\eta/2$ for all
$i$, so $\frac1m\sum_i(\hat f(x_i)-f^{*}(x_i))^2\le\eta^2/4$. In the finite
branch, Lemmas~\ref{lem:kegl} and~\ref{lem:contraction} give, with exact
weights, that except on at most $\eps m$ points
$\max\{|Q^{+}_{\gamma/2}(x_i)-y_i^{*}|,|Q^{-}_{\gamma/2}(x_i)-y_i^{*}|\}\le\eta/2$;
Lemmas~\ref{lem:quantization} and~\ref{lem:quantile} place the reconstructed
median $\hat f(x_i)$ inside $[K^{-}_{\gamma/2},K^{+}_{\gamma/2}]\subseteq
[y_i^{*}-\eta/2,\,y_i^{*}+\eta/2]$ on those points. On bad points
$|\hat f-f^{*}|\le1$, since all $h_t,f^{*},\hat f$ take values in $[0,1]$.
Hence, in both branches,
\begin{equation}
\label{eq:approx}
\frac1m\sum_{i=1}^m\bigl(\hat f(x_i)-f^{*}(x_i)\bigr)^2
\le\frac{\eta^2}4+\eps
\le\frac{\alpha^2}{576}+\frac{\alpha^2}{64}
=\frac{5\alpha^2}{288}<\frac{\alpha^2}{32},
\qquad
\|\hat f-f^{*}\|_S\le\frac{\alpha}{\sqrt{32}} .
\end{equation}

\paragraph{Closure.}
$\|\cdot\|_S$ is (a normalization of) a Euclidean seminorm on $\mathbb{R}^m$,
so the triangle inequality applies:
$\|\hat f-y\|_S\le\|f^{*}-y\|_S+\|\hat f-f^{*}\|_S$. Squaring, and using
$L_2(f^{*},S)\le1$ (all values in $[0,1]$) for the cross term while using
$L_2(f^{*},S)\le\opt+\alpha/8$ for the main term:
\[
L_2(\hat f,S)
\le \opt+\frac{\alpha}8
+2\cdot\frac{\alpha}{\sqrt{32}}\cdot 1
+\frac{\alpha^2}{32}
\le \opt+\Bigl(\frac18+\frac38+\frac1{32}\Bigr)\alpha
=\opt+\frac{17\alpha}{32}<\opt+\alpha,
\]
using $\frac2{\sqrt{32}}=\frac1{2\sqrt2}\le\frac38$ and $\alpha^2\le\alpha$.
This proves part~3 of Theorem~\ref{thm:main}; part~2 holds since medians of
$[0,1]$-valued votes, selected class members, and the zero function all map
into $[0,1]$, and part~1 holds by construction of $\kappa_\alpha$.

\subsection{Size}
\label{sec:size}

Let $b_q=\lceil\log_2(J+1)\rceil$ and $b_w=\lceil\log_2(64T+1)\rceil$. The
number of distinct stored points is $U\le Tn$. The finite branch stores $U$
points, one mode bit, $Ub_q$ grid-index bits, $UT$ incidence bits, and $Tb_w$
weight bits:
\[
|S'|+|B|\le 1+Tn\,(1+T+b_q)+T b_w ;
\]
the infinite branch needs only $1+n(1+b_q)$ plus $n$ points. With
$L=\ln(2/\alpha)\ \ (\ge\ln 2)$ we have $n=O(dL)$, $T=O(L)$, $b_q=O(L)$ and
$b_w=O(\log(2+L))$---all uniform over $L\ge\ln2$---so the dominant terms
$nT^2$ and $nTb_q$ are both $O(dL^3)$ (using $d\ge1$), and
\[
|S'|+|B|=O\bigl(\fat(\F,c'\alpha)\,\log^{3}(2/\alpha)\bigr),
\]
with no dependence on $m$ in any field. If one prefers the fixed-field syntax
of Definition~\ref{def:compression} (at most $k'$ points and exactly $k''$
bits), take $k'=Tn$ and $k''=1+Tn(T+b_q)+Tb_w$, padding the shorter branch
with zeros; the decoder knows the effective field lengths from the mode bit
and $U=|S'|$. This proves part~4, and completes the proof of
Theorem~\ref{thm:main}. \qed

\section{Concluding remarks}

\begin{remark}[Removing the dual factor, even in the realizable case]
\label{rem:realizable}
Every previously known bounded-size compression scheme for general real-valued
classes carries the factor $\fatd$: the agnostic schemes of
\citet{attias2023adversarially,attias2024agnostic}, the realizable
uniform-approximation scheme of \citet{hanneke2019sample}, and the realizable
pointwise scheme of \citet{aiyer2026scale} alike. In each case the dual
dimension enters through a sparsification step whose purpose is a
\emph{uniform} guarantee over all sample points. Our scheme shows that for
\emph{loss-based} guarantees the sparsification step---and with it the dual
factor---can be eliminated outright: an $\eps$-fraction of uncontrolled points
is affordable when the range is bounded, and K\'egl's bound then caps the
rounds at $O(\log(1/\eps))$ independently of $m$. Specializing our construction
to realizable samples ($y_i=f^{\circ}(x_i)$ for some $f^{\circ}\in\F$, taking
$f^{*}=f^{\circ}$, $\opt=0$) yields, to our knowledge, the first realizable
regression compression of size $\mathrm{poly}(\fat)\cdot\mathrm{polylog}$
without the dual dimension, with the empirical guarantee
$L_2(\hat f,S)\le\alpha$.
\end{remark}

\begin{remark}[Exactness is impossible; boundedness is necessary for our proof]
The additive $\alpha$ cannot be removed: bounded-size \emph{exact} agnostic
compression fails for $\ell_p$, $p\in(1,\infty)$
\citep{david2016supervised,attias2024agnostic}; consistently, our size bound
diverges (logarithmically) as $\alpha\to0$. Our proof uses $\Y=[0,1]$ twice
(bad-point contributions and the cross term); for unbounded labels the cross
term $2\|\hat f-f^{*}\|_S\sqrt{L_2(f^{*},S)}$ is uncontrolled and the argument
does not extend.
\end{remark}

\begin{remark}[Relation to conditional reductions]
\citet{attias2025reductions} reduce (approximate) regression compression to
binary classification compression via the pseudo-dimension; near-linear output
there is conditional on the binary sample compression conjecture, and the scale
is not preserved. Our result is unconditional and at the fat-shattering scale;
it does not resolve their Open Problem~4.6 (on reductions for fat-shattering
classes), but shows the target of that reduction can be reached directly.
\end{remark}

\begin{remark}[Why stored points cannot be dispensed with]
\label{rem:whystore}
The side information carries all reconstruction-relevant \emph{values} (grid
labels, incidences, weights), and the stored labels $y$ are never read by
$\rho_\alpha$. Nevertheless the stored examples are indispensable: over a
general infinite domain, the points $x$ themselves admit no finite-bit
encoding, and Definition~\ref{def:compression} makes stored original examples
the only carrier of domain points. This also explains why the synthetic values
$f^{*}(x)$ must travel as bits attached to \emph{original} examples rather
than as relabeled points $(x,f^{*}(x))$, which the protocol forbids.
\end{remark}

\begin{remark}[Effectivity]
The scheme is information-theoretic: $\kappa_\alpha$ invokes a near-minimizer,
lexicographically least K6-good tuples, and a choice function on $\F$; none of
this is required to be computable by Definition~\ref{def:compression}, matching
the effectivity status of prior general constructions
\citep{hanneke2019sample,attias2024agnostic}. The constants are large
($1/(-\ln\lambda)\approx122.3$, so $T\ge509$ already at $\alpha=1$); we made no
attempt to optimize $\lambda$ or the exponent $3$ of the polylog.
\end{remark}

\end{document}